\documentclass[conference]{IEEEtran}
\IEEEoverridecommandlockouts

\usepackage{cite}
\usepackage{hyperref}
\usepackage{booktabs}
\usepackage{graphicx}
\usepackage{subcaption}
\usepackage{bbm} 
\usepackage{amsmath,amssymb,amsfonts}
\usepackage{algorithmic}
\usepackage{textcomp}
\usepackage{mathtools}
\usepackage{amsthm} 
\usepackage{cleveref}
\usepackage{xcolor}
\def\BibTeX{{\rm B\kern-.05em{\sc i\kern-.025em b}\kern-.08em
    T\kern-.1667em\lower.7ex\hbox{E}\kern-.125emX}}

\usepackage{stfloats}

\begin{document}

\newcommand{\vecX}{\mathbf{x}}
\newcommand{\vecY}{\mathbf{y}}
\newcommand{\vecH}{\mathbf{h}}
\newcommand{\vecT}{\boldsymbol{\theta}}

\newcommand{\matD}{\mathbf{D}}
\newcommand{\matG}{\mathbf{G}}
\newcommand{\matX}{\mathbf{X}}
\newcommand{\matZ}{\mathbf{Z}}
\newcommand{\matR}{\mathbf{R}}
\newcommand{\spaceG}{\mathcal{G}_{\mathrm{rsto}}}
\newcommand{\spaceGhat}{\widehat{\mathbf{G}}}

\theoremstyle{plain}
\newtheorem{theorem}{Theorem}[section]
\newtheorem{proposition}[theorem]{Proposition}
\newtheorem{lemma}[theorem]{Lemma}
\newtheorem{corollary}[theorem]{Corollary}
\theoremstyle{definition}
\newtheorem{definition}[theorem]{Definition}
\newtheorem{assumption}[theorem]{Assumption}
\theoremstyle{remark}
\newtheorem{remark}[theorem]{Remark}


\title{CLaST: Context-aware Contrastive VAE for Probabilistic Time Series Forecasting\\

}
\author{\IEEEauthorblockN{Anonymous}}

\author{
\IEEEauthorblockN{1\textsuperscript{st} Alexander Marusov}
\IEEEauthorblockA{
\textit{Applied AI Institute}\\
Moscow, Russia} 
\\
\IEEEauthorblockN{4\textsuperscript{th} Aleksandr Yugay}
\IEEEauthorblockA{
\textit{Applied AI Institute}\\
Moscow, Russia\\
}
\and
\IEEEauthorblockN{2\textsuperscript{nd} Dmitry Anikin}
\IEEEauthorblockA{
\textit{Applied AI Institute}\\
Moscow, Russia}
\\
\IEEEauthorblockN{5\textsuperscript{th} Petr Sokerin}
\IEEEauthorblockA{
\textit{Applied AI Institute}\\
Moscow, Russia\\
}
\\
\IEEEauthorblockN{7\textsuperscript{th} Alexey Zaytsev}
\IEEEauthorblockA{
\textit{Applied AI Institute}\\
Moscow, Russia\\
}
\and
\IEEEauthorblockN{3\textsuperscript{rd} Vitaliy Pozdnyakov}
\IEEEauthorblockA{
\textit{Applied AI Institute}\\
Moscow, Russia\\
}
\\
\IEEEauthorblockN{6\textsuperscript{th} Ilya Kuleshov}
\IEEEauthorblockA{
\textit{Applied AI Institute}\\
Moscow, Russia\\
}
}


\maketitle

\begin{abstract}
Probabilistic forecasting models are widely used for time series forecasting in domains such as energy systems, finance, medicine, and transportation. In recent years, deep generative models have shown strong results on probabilistic forecasting, yet many conventional approaches struggle to capture internal temporal dependencies, leading to latent representations with limited expressive power. To address this limitation, we propose \textit{CLaST}, a VAE framework for probabilistic multivariate time series forecasting. Unlike existing generative models, CLaST learns embeddings that preserve contextual similarity between observations through our contrastive loss function. Experiments across nine widely adopted benchmarks demonstrate that CLaST consistently surpasses strong baseline methods. 
In short-term forecasting tasks, our approach achieves improvements of up to $16.4\%$ in CRPS and $14.4\%$ in NMAE over the second-best method. Furthermore, in long-term prediction CLaST attains superior overall performance, exceeding the second-best method by up to $48.6\%$ and $25.1\%$ in CRPS and NMAE, respectively.
\end{abstract}

\begin{IEEEkeywords}
probabilistic time series forecasting, similarity learning, generative models, contrastive learning
\end{IEEEkeywords}

\section{Introduction}
\label{sec:introduction}
Time series analysis presents a unique set of challenges: temporal dependencies, non-stationarity, multimodality, missing data, and the need to quantify predictive uncertainty. A central task in this domain is time-series forecasting, in which models are required to infer future dynamics from imperfect and often volatile historical data. In a wide range of applications, leveraging deep generative models has consistently improved predictive performance. Recurrent (DeepAR~\cite{salinas2020deepar}), attention-based (PatchTST~\cite{nie2023patchtst}), diffusion (TSDiff-Cond~\cite{kollovieh2023predict}), and flow-based (TFM~\cite{zhang2024trajectory}) approaches offer strong predictive performance. VAE-based methods like $K^2$VAE~\cite{wu2025k2vae} and LaST~\cite{wang2022learning} further structure latents using dynamical systems or trend/seasonal decomposition.

However, learning informative latent representations remains a significant challenge. While estimating mutual information (MI) between input observations and their representations offers a promising approach to enhance embedding informativeness, existing neural MI estimators—such as MINE~\cite{belghazi2018mutual}, CLUB~\cite{pmlr-v119-cheng20b}, and STUB~\cite{wang2022learning}—are computationally intensive, prone to high variance, and exhibit slow convergence in high-dimensional settings~\cite{shin2024experimental, guo2022tight, chan2019neural}. Such instability undermines the optimization process and ultimately restricts the quality of the learned representations.

Furthermore, current approaches often fail to capture latent temporal dependencies embedded within the contextual structure of sequential data. Here \textit{contextual similarity} reflects how closely two observations align in terms of their semantics. For instance, July 1 and July 2 temperature readings share a summer heat contextual regime, whereas July 1 and January 1 belong to distinct seasonal regimes. Crucially, this notion diverges from conventional statistical correlation: variables may co-vary due to shared confounders yet remain contextually distinct. For example, electricity demand and ice cream sales often peak simultaneously in summer, but they describe fundamentally different physical processes and thus occupy separate contextual domains.


To address these limitations, we propose \textit{CLaST}, a variational autoencoder framework for probabilistic time series forecasting that explicitly preserves contextual structure. To adequately capture contextual similarity, we consider a class of time series in which the covariance between observations depends solely on the temporal lag $\delta$ (i.e., $\mathrm{cov}(\vecX_t, \vecX_{t+\delta}) = f(\delta)$), while a time-varying mean maintains overall non-stationarity. Although classic statistical literature has implicitly addressed such processes, to the best of our knowledge, they lack a unified nomenclature. We therefore introduce the term \textit{Lag-Invariant Non-stationary Time Series (LINTS) process}. Unlike strict stationarity assumptions that frequently limit applicability to real-world data, the LINTS formulation retains realistic temporal dynamics through its evolving mean while imposing only a single, interpretable constraint on the dependence structure. Under the LINTS process our approach leverages similarity matrices---a cornerstone of contrastive self-supervised learning~\cite{Balestriero22}---to model the degree of closeness between observations in time. Specifically, we constrain the latent similarity matrices to mirror the structural patterns inherent in the input sequences. Extending the LaST architecture, we replace conventional mutual information (MI) estimators, which are prone to instability, with a contrastive-based penalty that preserves contextual similarity between objects. This formulation simultaneously enhances training stability and facilitates a more robust capture of underlying similarity dependencies.

The main contributions of this work are as follows:  
\begin{itemize}
    \item \textbf{CLaST: Context-aware Contrastive VAE for Probabilistic Time Series Forecasting.} To enhance probabilistic forecasting performance, we introduce a variational autoencoder (VAE) framework grounded in a novel objective function that explicitly accounts for the contextual similarity between samples. Our method ensures that the learned embeddings preserve meaningful sequential dependencies while effectively capturing intrinsic temporal relationships.
    
    \item \textbf{Theoretical properties of our loss function.} We derive the proposed objective function analytically under the LINTS process. Notably, Theorem~\ref{thm:toeplitz} establishes the theoretical optimality of our loss function within the specified class of similarity matrices. By formalizing lag-structured priors within the optimization objective, the loss injects structural knowledge into the learning pipeline. Given sufficiently high-capacity encoders, this formulation compels the model to extract representations that preserve the lag-aware information of the input time series.
    


    \item \textbf{Empirical validation}. Extensive experiments on both short- and long-term forecasting across nine established datasets from diverse domains, including Electricity, ETT, and Weather, demonstrate that CLaST generalizes effectively across heterogeneous time-series modalities. For the short-term setting it consistently outperforms strong baselines, achieving up to $16.4\%$ relative improvement in CRPS and $14.4\%$ in NMAE. Regarding the long-term predictions our approach surpasses second-best result up to $48.6\%$ and $25.1\%$ in CRPS and NMAE correspondingly. We provide the code implementation of the proposed method:\href{https://anonymous.4open.science/r/CLaST-3407/README.md}{https://anonymous.4open.science/r/CLaST-3407/README.md}.
\end{itemize}

\section{Related Work}
\label{sec:related_work}

\paragraph{Classic and deep learning methods.} Classic approaches (ARIMA~\cite{box1970distribution}, VAR~\cite{biller2003modeling}, SVR~\cite{cao2003support}, exponential smoothing~\cite{winters1960forecasting}) struggle with high-dimensional, nonlinear dynamics~\cite{jin2024survey}. Deep learning models address this: RNNs like LSTM~\cite{Hochreiter1997long} enable probabilistic forecasting (e.g., DeepAR~\cite{salinas2020deepar}), while Transformers like PatchTST~\cite{nie2023patchtst} capture long-range dependencies via patch-based attention. Conversely, linear models like DLinear~\cite{Zeng2022AreTE} remain strong baselines by decomposing series into trend and seasonal components.

\paragraph{Variational Autoencoders.} VAEs~\cite{kingma2013auto} offer probabilistic forecasting but often yield unstructured latents. Structured variants address this: LaST~\cite{wang2022learning} factorizes latents into trend/seasonal components with MI regularization, while $K^2$VAE~\cite{wu2025k2vae} combines VAEs with Koopman operators and Kalman filtering to explicitly model temporal dynamics and improve long-horizon accuracy.

\paragraph{Ordinary Differential Equations.} Continuous-time models (Neural ODE~\cite{chen2018neural}, Latent ODE~\cite{rubanova2019latent}, GRU-ODE~\cite{de2019gru}) handle irregular sampling naturally but incur high computational costs and lack latent regularization. Flow-based alternatives like Trajectory Flow Matching (TFM)~\cite{zhang2024trajectory} efficiently learn conditional trajectory derivatives via continuous normalizing flows without stochastic simulation. DeNOTS~\cite{kuleshov2026denots} introduces a negative feedback mechanism to stabilize vector fields, thereby enabling the modeling of long-term dependencies.

\paragraph{Diffusion models.} Diffusion~\cite{ho2020denoising} generates probabilistic forecasts by reversing a noise process conditioned on past observations. While effective for multi-horizon forecasting~\cite{rasul2021autoregressive} and imputation~\cite{tashiro2021csdi}, they are computationally intensive and lack latent interpretability. TSDiff-Cond~\cite{kollovieh2023predict} improves scalability by integrating S4 layers with convolutional channel mixing in an SSSD-based architecture.



Following the literature review, we chose LaST~\cite{wang2022learning}, $K^2$VAE~\cite{wu2025k2vae},
DeNOTS~\cite{kuleshov2026denots},
DeepAR~\cite{salinas2020deepar}, TFM~\cite{zhang2024trajectory}, TSDiff~\cite{kollovieh2023predict}, and PatchTST~\cite{nie2023patchtst} as baseline models due to their strong reported performance across their respective categories.

\section{Method}
\label{sec:method}

To present our proposed method, we first start with the problem statement. Then we outline the overall pipeline of the CLaST approach, followed by a detailed description of its key components. 
The notation appears throughout the text as needed for clarity. 
For a complete list of notation used throughout the paper, we refer the reader to the online documentation available on our \href{https://anonymous.4open.science/r/CLaST-3407/docs/notations.png}{GitHub repository}.

\subsection{Problem Statement.} We study probabilistic forecasting for multivariate time series. Let $\mathbf{x}_t \in \mathbb{R}^F$ denote the observations of $F$ variables at time $t$. Given a historical window $\mathbf{x}_{t-N+1:t}$ of length $N$, the task is to predict the future sequence $\mathbf{x}_{t+1:t+L}$ over a horizon $L$. In the probabilistic setting, the model outputs a predictive distribution $p(\mathbf{x}_{t+1:t+L} \mid \mathbf{x}_{t-N+1:t})$, which captures both future values and their uncertainty.

\subsection{CLaST}\label{method:semantic_loss}

We now introduce \emph{CLaST (Contrastive LaST)}, a generative--contrastive model for probabilistic time series forecasting.
CLaST builds upon the LaST architecture~\cite{wang2022learning} while fundamentally revising its latent regularization strategy.

To address data non-stationarity, LaST employs dedicated seasonal and trend encoders and decoders, providing informative representations for each component. Thus, LaST separately produces latent representations for trend $\matZ^t$ and seasonality $\matZ^s$, along with a predictor module that uses the Discrete Fourier Transform (DFT) for seasonal forecasting. 

A central component of LaST is its loss function. The evidence lower bound (ELBO) term $\mathcal{L}_{\mathrm{ELBO}}$ ensures that the learned trend and seasonal embeddings are sufficiently informative to both reconstruct the input time series and enable accurate predictions. The authors employ a slightly modified MINE-based mutual information estimator to maximize the amount of information preserved from the original input $\matX$ in the trend and seasonal representations, $\matZ^t$ and $\matZ^s$. The corresponding objectives are denoted as $I_{\text{MINE}}(\matX, \matZ^t)$ and $I_{\text{MINE}}(\matX, \matZ^s)$, respectively. Meanwhile, the term $I_{\text{STUB}}(\matZ^s, \matZ^t)$ promotes disentanglement between the trend and seasonal representations, encouraging them to capture complementary, non-overlapping information. The final loss function for LaST is defined in eq.~\ref{method:last_loss}. 

\begin{equation}\label{method:last_loss}
\begin{aligned}
\mathcal{L}_{\mathrm{LaST}}
= {} \mathcal{L}_{\mathrm{ELBO}}
     + I_{\mathrm{MINE}}(\matX, \matZ^{s})
     & + I_{\mathrm{MINE}}(\matX, \matZ^{t}) \\
     & - I_{\mathrm{STUB}}(\matZ^{s}, \matZ^{t}) .
\end{aligned}
\end{equation}

However, as discussed earlier, MI estimations could be unstable in practice. 
We replace the MI-based objective with a contextual similarity alignment loss that encourages latent representations to preserve the contextual structure of the input time series. The central idea is to align the \textit{estimated similarity matrix} $\widehat{\matG}(\matZ)$ with the data-induced similarity matrix $\matG = \matG(\matX)$, which is called \textit{ground truth similarity matrix}. Here, $\widehat{\matG}(\matZ)$ is computed from the trend and seasonal representations $\matZ^t$ and $\matZ^s$, while $\matG$ serves as the  similarity matrix. 

Although the similarity matrices estimated from trend embeddings, $\widehat{\matG}(\matZ^t)$, and seasonal embeddings, $\widehat{\matG}(\matZ^s)$, both approximate the same ground-truth matrix $\matG$, the corresponding representations remain separable in the embedding space, as illustrated in our experiments\footnote{\url{https://anonymous.4open.science/r/CLaST-3407/figs/trend_seasonal.png}}. This separation arises because the LaST framework explicitly models trend and seasonal components as distinct components.

All other parts include the LaST’s core architecture, in particular, separate trend and seasonal latents, and the DFT-based predictor would remain the same. 
The full CLaST objective with three terms is then given by
\begin{equation}\label{method:our_loss}
\mathcal{L}_{\text{CLaST}} =
\mathcal{L}_{\text{ELBO}} 
+ \bigl\lVert \matG - \widehat{\matG} (\matZ^t)\bigr\lVert^2_{\mathcal{F}} 
+ \bigl\lVert \matG - \widehat{\matG}(\matZ^s)\bigr\lVert^2_{\mathcal{F}}.
\end{equation}
Here,
\begin{equation}\label{method:our_regularizer}
\bigl\lVert \matG - \widehat{\matG} (\matZ) \bigr\rVert^2_{\mathcal{F}} = \sum_{i,j} \bigl(g_{ij} - \widehat{g}_{ij}(f_{\boldsymbol{\theta}})\bigr)^2.
\end{equation}
Detailed definitions and discussions of these estimated and ground truth similarity matrices are postponed to Section~\ref{method:ssl_background}. 

\subsection{Method Overview}\label{method:pipeline}
In Section~\ref{method:ssl_background}, we formally define the ground-truth $\matG$ and estimated similarity matrices $\widehat{\matG}$ that underpin our loss function. 

Our approach for constructing the ground truth similarity matrix $\matG$ from time series is described in Section~\ref{method:gt_matrix}. We start from a correlation matrix $\mathbf{R}$, which captures the statistical dependencies in the data. Since correlations do not directly reflect similarity relationships, we introduce a mapping that transforms the correlation space into a similarity space, yielding the  matrix $\matG$ with elements $g_{ij}$ presented in eq.~\eqref{eq:target-G}.

In Section~\ref{method:estimated_matrix}, we derive a closed-form expression for the estimated similarity matrix $\widehat{\matG}$ by formulating and solving a corresponding optimization problem. In~\Cref{thm:toeplitz} we define the elements $\widehat{g}_{ij}(f_{\boldsymbol{\theta}})$ of $\widehat{\matG}$.
We prove that this closed-form solution is the unique global minimizer.

Substituting the ground truth matrix $\matG$ (eq.~\eqref{eq:target-G}) and the estimated matrix $\widehat{\matG}(\matZ)$ (Thm.~\ref{thm:toeplitz}) into the  equation~\eqref{method:our_regularizer} yields the final loss function~\eqref{method:our_loss}.

\subsection{Theoretical Background}\label{method:ssl_background}

To explain our method, we consider a multivariate time series consisting of \(N\) observations \(\{\vecX_i\}_{i=1}^{N}\), where each \(\vecX_i \in \mathbb{R}^F\) represents the \(F\)-dimensional observation at time step \(i\). The time series can thus be represented as a matrix \(\matX \in \mathbb{R}^{N \times F}\).

To utilize a loss function based on pairwise comparisons, we introduce two complementary matrices: a \emph{ground truth similarity} matrix and an \emph{estimated similarity matrix} that are core concepts in contrastive self-supervised learning~\cite{Balestriero22}. 

The \emph{ground truth similarity matrix} $\matG = \{ g_{ij} \}_{i,j=1}^{N} \in \mathbb{R}^{N \times N}$ encodes prior knowledge about the contextual structure of the time series. Each entry $g_{ij}$ represents the desired level of similarity closeness between observations collected at time moments $i \in [1; N]$ and $j \in [1; N]$. This matrix, therefore, defines the target relational structure that the learned representations are expected to preserve.

The \emph{estimated similarity matrix} $\widehat{\matG}(\matZ) = \{ \hat{g}_{ij} \}_{i,j=1}^{N} \in \mathbb{R}^{N \times N}$, on the other hand, captures the similarities inferred by the model itself. The matrix $\widehat{\matG}(\matZ)$ is computed from the embedding matrix $\matZ = \{ \mathbf{z}_i \}_{i=1}^{N} \in \mathbb{R}^{N \times h}$, where each representation $\mathbf{z}_i = f_{\theta}(\vecX_i) \in \mathbb{R}^h$ is obtained by applying an encoder function $f_{\theta}$ to the input signal $\vecX_i$. As a result, $\widehat{\matG}$ reflects how close the samples are in the learned representation space.

By comparing $\widehat{\matG}$ with the ground truth matrix $\matG$, we obtain a natural objective for training the encoder. Minimizing the discrepancy between these two matrices encourages the learned embeddings to preserve the contextual relationships specified by the prior knowledge encoded in $\matG$:
\begin{equation}\label{method:ssl_loss}
\sum_{i,j} (g_{ij} - \widehat{g}_{ij}(f_{\boldsymbol{\theta}}))^2  \rightarrow \min_{\vecT}.
\end{equation}
We use objective function~\ref{method:ssl_loss} as our penalty in eq.~\eqref{method:our_regularizer} for trend and seasonal components. 
Defining this penalty requires specifying the ground truth matrix $\matG$ (Section~\ref{method:gt_matrix}) and the estimated matrix $\widehat{\matG}$ (Section~\ref{method:estimated_matrix}).

\subsection{Ground Truth Similarity Matrix $\matG$}\label{method:gt_matrix}

\subsubsection{Correlation As A Contextual Proxy}

To define the  matrix $\matG$, we will rely on statistical correlations between observations. In particular, we consider a multivariate time series $\{\vecX_i\}_{i=1}^{N}$ modeled as a \textit{Lag-Invariant Non-stationary Time Series (LINTS)}. In this framework, the inter-observation covariance depends exclusively on the temporal lag:
$\mathrm{cov}(\vecX_i, \vecX_{i+\delta}) = f(\delta)$.
While the dependence kernel is shift-invariant, the process itself remains non-stationary due to its time-varying mean.


The sample correlation matrix $\matR$ is constructed as follows.
\begin{enumerate}
    \item \textbf{Centering and normalizing.} For each time point $i \in [1;N]$, subtract a location estimate $\overline{\mathbf{x}}_i = \frac{1}{F} \sum_{j=1}^{F} x_{i}^{(j)}$ to obtain the centered vector, then normalize it to unit Euclidean length:
    \[
    \tilde{\mathbf{x}}_i = \frac{\mathbf{x}_i - \overline{\mathbf{x}}_i}{\|\mathbf{x}_i - \overline{\mathbf{x}}_i\|_2}.
    \]
    This yields observations $\tilde{\mathbf{x}}_i$ that lie on the unit sphere in $\mathbb{R}^F$.
    
    \item \textbf{Computing pairwise similarities.} The correlation matrix $\widetilde{\matR}$ is obtained as the Gram matrix of $\widetilde{\matX} = [\tilde{\mathbf{x}}^{\top}_1, \ldots, \tilde{\mathbf{x}}^{\top}_N]$:
    \[
    \widetilde{\matR} = \widetilde{\matX} \widetilde{\matX}^{\top} \in \mathbb{R}^{N \times N}.
    \]

    \item \textbf{Symmetric Toeplitz structure.} For each lag $\delta \in \{0, \ldots, N - 1\}$ we compute the median empirical correlation
    \[
    r_{\delta} = \operatorname{median} \left\{ \widetilde{r}_{ij} : |i-j|=\delta \right\}.
    \]
    Replacing each entry $\widetilde{r}_{ij}$ by $r_{|i-j|}$ yields a symmetric Toeplitz matrix $\matR$ whose $(i, j)$-entry depends only on the time lag $|i-j|$:
    \[
    r_{ij} = r_{|i - j|}, \qquad i,j = 1, \ldots, N.
    \]
\end{enumerate}
Thus the entire correlation structure is summarized by the $N$ numbers $r_0, r_1, \dots, r_{N - 1}$, with $r_0 = 1$ since $\matR_{ii} = 1$. The Toeplitz property reflects that covariance depends only on lag, while the symmetry $\widetilde{\matR}_{ij} = \widetilde{\matR}_{ji}$ follows from the definition of correlation. 

We now define a  matrix \(\mathbf{G}\) derived from the correlation matrix \(\mathbf{R}\).  
Each entry \(g_{ij} \in [0,1]\) quantifies the degree to which observations collected at time points \(i\) and \(j\) play similar roles within the underlying dynamical process.  
Because the correlations \(r_{ij}\) lie in \([-1; 1]\), we need a mapping \(m: [-1;1] \to [0;1]\) that converts correlation values into probabilities of similarity. Following~\cite{Balestriero22}, we set the main diagonal of $\matG$ to zero, as these entries correspond to the trivial self-similarity of each sample.

We seek a principled mapping from correlations to similarity scores. \Cref{thm:sigmoid} is derived using classical Bayesian decision theory under Gaussian class-conditional assumptions. We show that for LINTS-process, the Bayes-optimal posterior mapping from the observed lag-$\tau$ correlation to contextual similarity is logistic (sigmoid). For simplicity, Theorem~\ref{thm:sigmoid} is stated for the univariate case.

\begin{theorem}[Optimal mapping from feature to similarity space]
\label{thm:sigmoid}
Let $\{x_i\}_{i = 1}^{N}$ be a LINTS-process: covariance depends only on the fixed lag $\delta \ge 0$, so $\mathrm{cov}(x_i, x_{i + \delta}) = f(\delta)$.
Let $Y_{\delta} \in \{0, 1\}$ be a similarity indicator, and let
$\rho_y \in (-1, 1)$ denote the population lag-$\delta$ correlation
associated with $Y_{\delta} = y$.

Assume:
\begin{align*}
\hypertarget{ass:A1}{\text{(A1)}}\quad &
\text{there exists continuously differentiable function }\\ 
& g \in C^1((-1; 1)) \text{ such that }
z_{\delta} := g(r_{\delta}) \text{ satisfies} \\
& p(z_{\delta} \mid Y_{\delta}=y)
=
\frac{1}{\sqrt{2\pi\sigma^2}}
\exp\!\left(
-\frac{(z_{\delta}-\mu_y)^2}{2\sigma^2}
\right), \\
& \mu_y := g(\rho_y), \quad
\sigma^2 \text{ independent of } y; \\[0.5mm]
\label{ass:a2}
\hypertarget{ass:A2}{\text{(A2)}} \quad &
\rho_0 \neq \rho_1.
\end{align*}

Then, in the small-correlation regime $|r_{\delta}|\ll1$,
\begin{equation}
P(Y_{\delta}=1 \mid r_{\delta})
=
\sigma(\alpha r_{\delta} + \beta),
\end{equation}
for some $\alpha,\beta \in \mathbb{R}$, where
$\sigma(x)=(1+e^{-x})^{-1}$.
\end{theorem}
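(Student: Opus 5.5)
The plan is to apply Bayes' rule to the transformed statistic $z_\delta = g(r_\delta)$, show that the posterior log-odds are affine in $z_\delta$, and then linearize $g$ around $r_\delta = 0$ to obtain an affine function of $r_\delta$.

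\emph{Step 1: posterior in terms of $z_\delta$.} Write $\pi_y = P(Y_\delta = y)$ and assume $\pi_1 \in (0,1)$. Bayes' rule gives
\[
P(Y_\delta = 1 \mid z_\delta) = \frac{\pi_1 p(z_\delta \mid 1)}{\pi_1 p(z_\delta \mid 1) + \pi_0 p(z_\delta \mid 0)} = \sigma\bigl(\ell(z_\delta)\bigr),
\]
where $\ell$ is the log-odds. Substitute the Gaussian densities from (A1). Because $\sigma^2$ does not depend on $y$, the normalizing constants cancel, and so do the quadratic terms $z_\delta^2/(2\sigma^2)$. What remains is
\[
\ell(z_\delta) = \frac{\mu_1 - \mu_0}{\sigma^2}\, z_\delta + \frac{\mu_0^2 - \mu_1^2}{2\sigma^2} + \log\frac{\pi_1}{\pi_0},
\]
which is exactly affine in $z_\delta$.

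\emph{Step 2: conditioning on $r_\delta$ instead of $z_\delta$.} Conditioning on $r_\delta$ is the same as conditioning on $z_\delta$ provided $g$ is a bijection, at least locally, e.g.\ strictly monotone near $0$. This is also needed to turn $\rho_0 \neq \rho_1$ (A2) into $\mu_0 \neq \mu_1$, so that the slope is nonzero; the canonical example is the Fisher transform $g = \operatorname{artanh}$. Even without injectivity, any Jacobian factor $|g'(r_\delta)|$ from a change of variables appears in both numerator and denominator of the posterior and cancels. Hence $P(Y_\delta = 1 \mid r_\delta) = \sigma\bigl(\ell(g(r_\delta))\bigr)$ exactly.

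\emph{Step 3: small-correlation regime.} Since $g \in C^1$, a first-order Taylor expansion at $0$ gives $g(r_\delta) = g(0) + g'(0)\, r_\delta + o(r_\delta)$. Plugging this into $\ell$ yields $\sigma(\alpha r_\delta + \beta)$ up to $o(r_\delta)$, with
\[
\alpha = \frac{(\mu_1 - \mu_0)\, g'(0)}{\sigma^2}, \qquad
\beta = \frac{(\mu_1 - \mu_0)\, g(0)}{\sigma^2} + \frac{\mu_0^2 - \mu_1^2}{2\sigma^2} + \log\frac{\pi_1}{\pi_0}.
\]
Because $\sigma$ is $1/4$-Lipschitz, the error in the posterior is also $o(r_\delta)$, which is how "$\approx$" should be read in the regime $|r_\delta| \ll 1$. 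For the Fisher transform, $g(0) = 0$ and $g'(0) = 1$, so $\alpha = (\mu_1 - \mu_0)/\sigma^2$.

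\emph{Main obstacle.} The statement asserts an equality, but under (A1) alone the equality is exact only in $z_\delta$. In $r_\delta$ it holds only to first order, unless $g$ is affine. I would state this explicitly and make precise the implicit assumptions: priors $\pi_y \in (0,1)$ and local monotonicity of $g$. The LINTS hypothesis serves only to make $r_\delta$ and the class-conditional laws well defined for each fixed lag $\delta$. It plays no further role in the computation.
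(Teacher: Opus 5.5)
Your proposal follows the paper's proof essentially line for line: Bayes' rule with equal-variance Gaussians makes the log-odds affine in $z_\delta$, and a first-order Taylor expansion of $g$ at $0$ then gives $\alpha = (\mu_1-\mu_0)g'(0)/\sigma^2$ and $\beta$ shifted by $\alpha g(0)$, with the conclusion holding only up to $o(r_\delta)$, which the paper's proof also concedes. Your side remarks are more careful than the paper, which silently uses injectivity of $g$ both to get $\mu_0\neq\mu_1$ and to identify conditioning on $r_\delta$ with conditioning on $z_\delta$; however, that identification needs $g$ injective on the whole support of $r_\delta$ (monotonicity near $0$ alone is not enough), so drop the ``even without injectivity'' aside, since without injectivity (A1) does not determine the law of $r_\delta$ given $Y_\delta$.
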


\begin{remark}[Example of a valid transformation]
A classical choice satisfying Assumption \hyperlink{ass:A1}{A1} is the Fisher
$z$-transform
\[
g(r) = \frac12 \log\frac{1+r}{1-r},
\]
for which, under mild mixing conditions, $g(r_{\delta})$ is asymptotically
Gaussian with variance independent of the true correlation.
This motivates the modeling assumption in \hyperlink{ass:A1}{(A1)}.
\end{remark}

The proof of the~\Cref{thm:sigmoid} can be found in the Appendix~\ref{app:sigmoid}.

\begin{figure*}[t]
    \centering
    \includegraphics[width=0.8\linewidth]{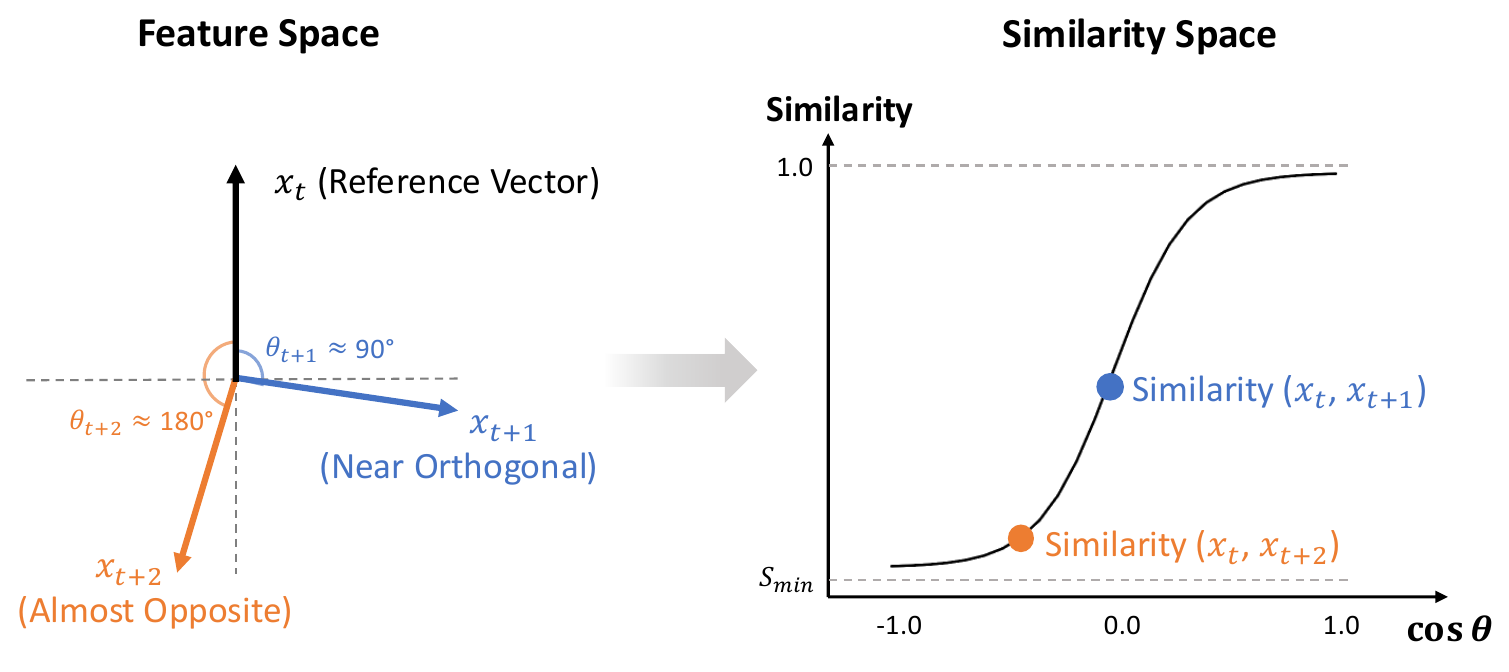}
    \caption{\textbf{Connection between the original feature space and the contextual similarity space.} In the feature space, $x_t$ is the observation at the current time step, and $x_{t+1}$ and $x_{t+2}$ are the inputs at the subsequent time steps. In our formulation, the correlation coefficient $r$ between normalized observations corresponds to the cosine of the angle between them ($r=\cos\theta$). Correlations are mapped to contextual similarity scores via a sigmoid with a lower bound $s_{\min}$, yielding a smooth notion of contextual proximity.}
    \label{fig:data_semantic_space}
\end{figure*}

\subsubsection{Target contextual Similarity}
The intuition underlying the connection between the data space and the similarity space, formalized in \Cref{thm:sigmoid}, is illustrated in Figure~\ref{fig:data_semantic_space}. Accordingly, the entries of the true dependency matrix $\matG$ are defined using a sigmoid function as follows:
\begin{equation}
g_{ij} =
\begin{cases}
s_{\min} + (1 - s_{\min})\,\sigma(\alpha r_{|i-j|}), & i \neq j, \\[4pt]
0, & i = j,
\end{cases}
\label{eq:target-G}
\end{equation}
where \(s_{\min} \in (0, 0.5)\) specifies a lower bound on contextual affinity and $\alpha \in [1,10]$ controls the sensitivity of dependency to variations in correlation.

We introduce $s_{\min}$ to reflect the assumption that even maximally negatively correlated vectors still share structural information and should not have zero contextual similarity.
Our ablation studies~\ref{app:ablation_study} demonstrate that the sigmoid function provides the optimal mapping from correlation space to similarity space, a result validated both theoretically and empirically.

\subsection{Estimated Similarity Matrix $\widehat{\matG}$}\label{method:estimated_matrix}

Now we need to derive an expression of the estimated similarity matrix $\widehat{\matG}$. To do this, we consider the optimization problem of learning an estimated similarity matrix
\( \widehat{\matG} \in \mathbb{R}^{N \times N} \) from a symmetric distance matrix
\( \matD = \{d_{ij}\}_{i,j=1}^{N} \in \mathbb{R}^{N \times N} \), where \( d_{ii} = 0 \). Here $d_{ij} = d(z_i, z_j)$ denotes the distance (e.g. euclidean) between representations $z_i$ and $z_j$ at time moments $i$ and $j$.
The matrix \( \widehat{\matG} \) is assumed to be \emph{Toeplitz}, i.e., its entries depend only on the temporal lag
\( |i-j| \).

Following~\cite{Balestriero22}, we cast the problem of estimating the similarity matrix $\widehat{\matG}$ as a Laplacian estimation task in graph signal processing~\cite{dong2016learning, kalofolias2016learn}. We therefore obtain $\widehat{\matG}$ by solving an optimization problem specifically tailored to time-series data, restricting the search space to $\mathcal{G}$, the set of symmetric, non-negative Toeplitz matrices with a zero main diagonal. 

\begin{equation}
\label{eq:general_problem}
\begin{aligned}
\mathrm{Tr}(\matD \widehat{\matG}) &+ \mathcal{R}_{\log}(\widehat{\matG}) \rightarrow \min_{\widehat{\matG} \in \mathcal{G}}, \\
\end{aligned}
\end{equation}
where \( \mathcal{R}_{\log}(\widehat{\matG}) = \tau \sum_{i \neq j}\hat{g}_{ij}\bigl(\ln(\hat{g}_{ij}) - 1\bigr)\) 
is a logarithmic regularizer.


\begin{theorem}
\label{thm:toeplitz}
(Necessary and sufficient conditions). Let $\mathbf{D}$ be the matrix of pairwise distances, and the regularizer $\mathcal{R_{\log}} (\widehat{\matG}) = \tau \sum_{i \neq j} \hat{g}_{ij} (\ln(\hat{g}_{ij}) - 1)$. 
The matrix $\widehat{\mathbf{G}}$ with elements 
\begin{align*}
    \widehat{g}_{ij} 
    = \exp\!\left( -\dfrac{\phi_{|i-j|}}{\tau} \right)
    \mathbbm{1}_{\{i\neq j\}},
\end{align*}
with $\phi_k = \frac{1}{N - k} \sum_{|i-j| = k} d_{ij}$ 
is the unique global minimizer for the optimization problem~\eqref{eq:general_problem}.
\end{theorem}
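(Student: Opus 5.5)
We parametrize the feasible set $\mathcal{G}$ explicitly and show that the problem~\eqref{eq:general_problem} decouples into $N-1$ independent scalar problems. A matrix $\widehat{\matG}\in\mathcal{G}$ is symmetric, Toeplitz, non-negative and has zero diagonal. It is therefore determined by the vector $w=(w_1,\dots,w_{N-1})\in[0,\infty)^{N-1}$ via $\hat g_{ij}=w_{|i-j|}$ for $i\neq j$ and $\hat g_{ii}=0$, and this map is a bijection between $\mathcal{G}$ and the closed orthant. We extend $x\mapsto x(\ln x-1)$ continuously to $x=0$ by the value $0$, so the regularizer is defined on all of $\mathcal{G}$.

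\textbf{Step 1 (decoupling).} Since $\matD$ and $\widehat{\matG}$ are symmetric, $\mathrm{Tr}(\matD\widehat{\matG})=\sum_{i,j}d_{ij}\hat g_{ij}=\sum_{k=1}^{N-1}w_k\sum_{|i-j|=k}d_{ij}$, where the inner sum runs over ordered pairs. For each lag $k$ there are $2(N-k)$ such ordered pairs, and the regularizer counts the same entries. Hence the objective equals
\[
J(w)=\sum_{k=1}^{N-1}2(N-k)\Bigl[\,\phi_k w_k+\tau\, w_k(\ln w_k-1)\Bigr].
\]
Here $\phi_k$ is the average distance at lag $k$, i.e.\ the sum in the definition of $\phi_k$ is read over the $N-k$ unordered pairs $i<j$ with $j-i=k$. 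I will state this convention explicitly, since reading it over ordered pairs would introduce a spurious factor $2$. The weights $2(N-k)>0$ do not affect the minimizer, so minimizing $J$ reduces to minimizing each $h_k(w)=\phi_k w+\tau w(\ln w-1)$ over $w\ge 0$ separately.

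\textbf{Step 2 (scalar problem).} For $\tau>0$ we have $h_k''(w)=\tau/w>0$ on $(0,\infty)$, so $h_k$ is strictly convex on $[0,\infty)$; it is also continuous there. Its derivative is $h_k'(w)=\phi_k+\tau\ln w$. This tends to $-\infty$ as $w\to0^+$, so $w=0$ cannot be a minimizer: moving slightly into the interior strictly decreases $h_k$. Moreover $h_k(w)\to\infty$ as $w\to\infty$, because the $w\ln w$ term dominates, so a minimizer exists. It must then be the unique interior stationary point, $\phi_k+\tau\ln w_k=0$, i.e.\ $w_k=\exp(-\phi_k/\tau)$.

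\textbf{Step 3 (necessity, sufficiency, uniqueness).} $J$ is a positive combination of strictly convex functions of separate coordinates, so it is strictly convex on the convex set $[0,\infty)^{N-1}$. The first-order (KKT) conditions are therefore both necessary and sufficient. The inequality constraints $w_k\ge0$ are inactive at the solution, so the multipliers vanish and the conditions reduce to $\partial J/\partial w_k=0$. Strict convexity gives uniqueness of the global minimizer, and mapping $w$ back through the bijection yields exactly $\hat g_{ij}=\exp(-\phi_{|i-j|}/\tau)\mathbbm{1}_{\{i\neq j\}}$.

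The only delicate points are the bookkeeping of pair counts in Step 1, which must be consistent between the trace term and the regularizer, and the boundary $w_k=0$, where the regularizer is not differentiable. I handle the boundary with the derivative-blow-up argument of Step 2 rather than through formal KKT multipliers.
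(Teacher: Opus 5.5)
Your proof is correct and follows essentially the same route as the paper. Both parametrize $\mathcal{G}$ by the lag values, reduce to the same separable objective $\sum_{k} 2(N-k)\bigl[\phi_k w_k + \tau w_k(\ln w_k - 1)\bigr]$, and conclude from strict convexity and the stationarity condition $\phi_k + \tau \ln w_k = 0$; beyond that, you handle the boundary $w_k = 0$ (which the paper sidesteps by minimizing only over $\widehat{g}_k > 0$ even though $\mathcal{G}$ allows zeros) and state the unordered-pair convention for $\phi_k$, both of which fill small gaps in the paper's write-up.
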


The detailed derivation and proof of \Cref{thm:toeplitz}
are given in Appendix~\ref{app:toeplitz_without_constraints}. 

\section{Experimental Results}
\label{sec:exp_results}

\subsection{Experiments Setup}
\paragraph{Datasets} We evaluate our models on nine publicly available datasets that span several domains: ETT (ETTh1, ETTh2, ETTm1, ETTm2)~\cite{haoyietal-informer-2021}, Traffic~\cite{lai2018modeling}, Electricity\footnote{\url{https://archive.ics.uci.edu/dataset/321/electricityloaddiagrams20112014}, accessed in January 2026.}, Weather\footnote{\url{https://www.bgc-jena.mpg.de/wetter/}, accessed in January 2026.}. Detailed information can be found in the Appendix~\ref{app:datasets}.

\paragraph{Metrics.}
We evaluate the forecasts using two commonly used metrics: the Continuous Ranked Probability Score (CRPS) and the Normalized Mean Absolute Error (NMAE) that correspond to the quality of uncertainty estimation and forecasting correspondingly. The detailed descriptions of the metrics are in the Appendix~\ref{app:metrics}.

\begin{table*}[!htbp]
\centering
\caption{\textbf{Comparison of short-term probabilistic time series forecasting performance} ($N = 96$, $L=24$) across various real-world datasets. Lower CRPS and NMAE scores indicate superior forecasting accuracy. Reported results show the \textbf{mean values and standard errors}, computed from \textbf{three independent runs} involving model retraining and evaluation. \textbf{Boldface} highlights the best-performing method, while \underline{underlining} denotes the second-best result.}
\label{tab:short_term}
\resizebox{\textwidth}{!}{
\begin{tabular}{llccccccccc}
\toprule
Dataset & Metric & CLaST (ours) & LaST & $K^2$VAE & DeepAR & TFM & TSDiff & PatchTST & DeNOTS \\
\midrule
Electricity 
& CRPS & \underline{$0.089 \pm 0.000$} & $0.144 \pm 0.021$ & $\mathbf{0.082 \pm 0.000}$ & $0.117 \pm 0.001$ & $0.150 \pm 0.011$ & $0.115 \pm 0.006$ & $0.116 \pm 0.002$ & $0.201 \pm 0.002$ \\
& NMAE & \underline{$0.108 \pm 0.000$} & $0.166 \pm 0.022$ & $\mathbf{0.107 \pm 0.000}$ & $0.151 \pm 0.002$ & $0.201 \pm 0.014$ & $0.150 \pm 0.008$ & $0.116 \pm 0.002$ & $0.274 \pm 0.001$ \\
\midrule
Traffic 
& CRPS & \underline{$0.203 \pm 0.001$} & $0.309 \pm 0.011$ & $\mathbf{0.164 \pm 0.001}$ & $0.207 \pm 0.001$ & $0.321 \pm 0.013$ & $0.208 \pm 0.004$ & $0.256 \pm 0.001$ & $0.386 \pm 0.019$ \\
& NMAE & \underline{$0.241 \pm 0.000$} & $0.360 \pm 0.010$ & $\mathbf{0.211 \pm 0.002}$ & $0.264 \pm 0.002$ & $0.426 \pm 0.022$ & $0.253 \pm 0.004$ & $0.256 \pm 0.001$ & $0.500 \pm 0.032$ \\
\midrule
ETTh1 
& CRPS & $\mathbf{0.220 \pm 0.002}$ & $0.254 \pm 0.004$ & \underline{$0.222 \pm 0.002$} & $0.384 \pm 0.071$ & $0.540 \pm 0.018$ & $0.373 \pm 0.041$ & $0.294 \pm 0.006$ & $0.503 \pm 0.008$ \\
& NMAE & $\mathbf{0.267 \pm 0.003}$ & $0.289 \pm 0.005$ & \underline{$0.283 \pm 0.006$} & $0.498 \pm 0.088$ & $0.623 \pm 0.026$ & $0.492 \pm 0.037$ & $0.294 \pm 0.006$ & $0.645 \pm 0.017$ \\
\midrule
ETTh2 
& CRPS & $\mathbf{0.316 \pm 0.004}$ & $0.460 \pm 0.058$ & \underline{$0.372 \pm 0.005$} & $1.168 \pm 0.091$ & $1.738 \pm 0.451$ & $0.989 \pm 0.213$ & $0.508 \pm 0.055$ & $1.920 \pm 0.398$ \\
& NMAE & $\mathbf{0.371 \pm 0.004}$ & $0.546 \pm 0.077$ & \underline{$0.405 \pm 0.005$} & $1.485 \pm 0.138$ & $1.892 \pm 0.384$ & $1.281 \pm 0.245$ & $0.508 \pm 0.055$ & $2.207 \pm 0.421$ \\
\midrule
ETTm1 
& CRPS & $\mathbf{0.180 \pm 0.004}$ & $0.217 \pm 0.004$ & \underline{$0.186 \pm 0.005$} & $0.359 \pm 0.008$ & $0.556 \pm 0.038$ & $0.299 \pm 0.036$ & $0.247 \pm 0.005$ & $0.561 \pm 0.048$ \\
& NMAE & $\mathbf{0.218 \pm 0.004}$ & $0.257 \pm 0.008$ & \underline{$0.228 \pm 0.003$} & $0.473 \pm 0.005$ & $0.634 \pm 0.038$ & $0.396 \pm 0.059$ & $0.247 \pm 0.005$ & $0.721 \pm 0.050$ \\
\midrule
ETTm2 
& CRPS & $\mathbf{0.229 \pm 0.000}$ & \underline{$0.274 \pm 0.004$} & $0.314 \pm 0.027$ & $0.999 \pm 0.182$ & $1.784 \pm 0.121$ & $0.468 \pm 0.066$ & $0.369 \pm 0.022$ & $1.591 \pm 0.889$ \\
& NMAE & $\mathbf{0.271 \pm 0.002}$ & $0.325 \pm 0.007$ & \underline{$0.316 \pm 0.013$} & $1.273 \pm 0.201$ & $1.904 \pm 0.151$ & $0.630 \pm 0.087$ & $0.369 \pm 0.022$ & $1.815 \pm 0.906$ \\
\midrule
Weather 
& CRPS & $\mathbf{0.296 \pm 0.011}$ & $0.538 \pm 0.057$ & $0.683 \pm 0.011$ & $0.360 \pm 0.025$ & $1.008 \pm 0.322$ & \underline{$0.327 \pm 0.060$} & $0.403 \pm 0.035$ & $0.434 \pm 0.022$ \\
& NMAE & $\mathbf{0.345 \pm 0.016}$ & $0.676 \pm 0.090$ & $0.491 \pm 0.014$ & $0.425 \pm 0.017$ & $1.206 \pm 0.403$ & $0.412 \pm 0.093$ & \underline{$0.403 \pm 0.035$} & $0.562 \pm 0.030$ \\
\midrule
\end{tabular}
}
\end{table*}

\begin{table*}[!t]
\centering
\caption{\textbf{Comparison of long-term probabilistic time series forecasting performance} ($N = 96, L = 720$) across various real-world datasets. Lower CRPS and NMAE scores indicate superior forecasting accuracy. Reported results show the \textbf{mean values and standard errors}, computed from \textbf{three independent runs} involving model retraining and evaluation. \textbf{Boldface} highlights the best-performing method, while \underline{underlining} denotes the second-best result.}
\label{tab:long_term}
\resizebox{\textwidth}{!}{
\begin{tabular}{llcccccccc}
\toprule
Dataset & Metric & CLaST (ours) & LaST & $K^2$VAE & DeepAR & TFM & TSDiff & PatchTST & DeNOTS \\
\midrule
Electricity
& CRPS
& \underline{$0.126 \pm 0.002$}
& $0.146 \pm 0.001$
& $\mathbf{0.115 \pm 0.006}$
& $0.163 \pm 0.009$
& $0.210 \pm 0.017$
& \underline{$0.126 \pm 0.003$}
& $0.156 \pm 0.002$
& $0.186 \pm 0.009$ \\
& NMAE
& \underline{$0.146 \pm 0.002$}
& $0.169 \pm 0.001$
& $\mathbf{0.140 \pm 0.004}$
& $0.204 \pm 0.011$
& $0.292 \pm 0.022$
& $0.165 \pm 0.004$
& $0.156 \pm 0.002$
& $0.186 \pm 0.009$ \\
\midrule
Traffic
& CRPS
& $0.247 \pm 0.001$
& $0.301 \pm 0.003$
& $\mathbf{0.189 \pm 0.002}$
& \underline{$0.220 \pm 0.003$}
& $0.473 \pm 0.018$
& $0.297 \pm 0.027$
& $0.276 \pm 0.002$
& $0.414 \pm 0.143$ \\
& NMAE
& $0.288 \pm 0.001$
& $0.343 \pm 0.002$
& $\mathbf{0.243 \pm 0.000}$
& $0.286 \pm 0.006$
& $0.637 \pm 0.023$
& $0.368 \pm 0.033$
& \underline{$0.276 \pm 0.002$}
& $0.419 \pm 0.151$ \\
\midrule
ETTh1
& CRPS
& \underline{$0.342 \pm 0.005$}
& $0.573 \pm 0.009$
& $\mathbf{0.334 \pm 0.008}$
& $0.690 \pm 0.211$
& $0.590 \pm 0.060$
& $0.575 \pm 0.059$
& $0.511 \pm 0.001$
& $0.555 \pm 0.014$ \\
& NMAE
& $\mathbf{0.379 \pm 0.004}$
& $0.614 \pm 0.010$
& \underline{$0.428 \pm 0.012$}
& $0.830 \pm 0.240$
& $0.839 \pm 0.088$
& $0.701 \pm 0.054$
& $0.511 \pm 0.001$
& $0.750 \pm 0.019$ \\
\midrule
ETTh2
& CRPS
& $\mathbf{0.544 \pm 0.001}$
& $2.531 \pm 0.054$
& \underline{$1.000 \pm 0.235$}
& $2.332 \pm 0.013$
& $2.495 \pm 0.688$
& $2.254 \pm 0.106$
& $1.541 \pm 0.022$
& $2.082 \pm 0.136$ \\
& NMAE
& $\mathbf{0.588 \pm 0.002}$
& $2.602 \pm 0.038$
& \underline{$0.769 \pm 0.083$}
& $2.611 \pm 0.053$
& $3.204 \pm 0.992$
& $2.488 \pm 0.104$
& $1.541 \pm 0.022$
& $2.356 \pm 0.104$ \\
\midrule
ETTm1
& CRPS
& \underline{$0.309 \pm 0.000$}
& $0.421 \pm 0.013$
& $\mathbf{0.266 \pm 0.024}$
& $0.466 \pm 0.073$
& $0.592 \pm 0.010$
& $0.483 \pm 0.020$
& $0.404 \pm 0.029$
& $0.576 \pm 0.009$ \\
& NMAE
& \underline{$0.341 \pm 0.001$}
& $0.487 \pm 0.015$
& $\mathbf{0.328 \pm 0.039}$
& $0.623 \pm 0.069$
& $0.855 \pm 0.025$
& $0.629 \pm 0.025$
& $0.404 \pm 0.029$
& $0.614 \pm 0.010$ \\
\midrule
ETTm2
& CRPS
& $\mathbf{0.508 \pm 0.001}$
& $1.871 \pm 0.241$
& \underline{$0.989 \pm 0.275$}
& $2.064 \pm 0.646$
& $2.899 \pm 0.782$
& $1.642 \pm 0.450$
& $1.240 \pm 0.152$
& $1.931 \pm 0.016$ \\
& NMAE
& $\mathbf{0.534 \pm 0.001}$
& $2.022 \pm 0.251$
& \underline{$0.713 \pm 0.105$}
& $2.388 \pm 0.633$
& $3.783 \pm 0.975$
& $1.915 \pm 0.430$
& $1.240 \pm 0.152$
& $2.008 \pm 0.036$ \\
\midrule
Weather
& CRPS
& \underline{$0.447 \pm 0.001$}
& $0.672 \pm 0.028$
& $0.661 \pm 0.007$
& $0.465 \pm 0.064$
& $1.344 \pm 0.246$
& $\mathbf{0.375 \pm 0.030}$
& $0.685 \pm 0.034$
& $0.803 \pm 0.071$ \\
& NMAE
& \underline{$0.482 \pm 0.001$}
& $0.835 \pm 0.043$
& $0.606 \pm 0.008$
& $0.555 \pm 0.067$
& $1.542 \pm 0.422$
& $\mathbf{0.465 \pm 0.033}$
& $0.685 \pm 0.034$
& $0.865 \pm 0.084$ \\
\midrule
\end{tabular}
}
\end{table*}

\paragraph{Baselines.} We compare our approach against several generative models, including modern $K^2$VAE~\cite{wu2025k2vae}, LaST~\cite{wang2022learning}, TSDiff~\cite{kollovieh2023predict}, and TFM~\cite{zhang2024trajectory}. Additionally, we include PatchTST~\cite{nie2022time}, a point-wise forecasting model, for reference.

\paragraph{Backbone.}
To choose an appropriate backbone encoder, we considered three types of backbone architectures: DLinear, MLP and PatchTST. 
The hyperparameters for the selected backbones are detailed in Appendix~\ref{app:implementation_details}.
For both short- and long-term forecasting tasks, we consistently employ PatchTST as the backbone encoder. 

\subsection{Main Results}

\paragraph{Short-term forecasting ($N=96,L=24$)} For short-term prediction, all experiments are performed with a forecasting horizon of $L=24$ and an context length of $N=96$.

As shown in Table~\ref{tab:short_term}, CLaST delivers the best performance across several datasets, highlighting its effectiveness for probabilistic forecasting. It consistently outperforms the second-best method on datasets ETTh1, ETTh2, ETTm1, ETTm2, and Weather, achieving relative improvements of $0.9–16.4\%$ in CRPS and $4.4–14.4\%$ in NMAE. Our comparative analysis reveals that when PatchTST is trained within the CLaST generative framework, it demonstrably surpasses its pointwise-trained counterpart across every benchmark dataset, as measured by the NMAE. The observed performance gains range from a minimum improvement of $5.86\%$ on the Traffic dataset to a substantial maximum of $26.97\%$ on ETTh2, underscoring the significant advantage conferred by our CLaST generative learning paradigm.


\paragraph{Long-term forecasting ($N=96, L=720$)} As presented in Table~\ref{tab:long_term}, CLaST achieves superior CRPS and NMAE performance on the demanding ETTh2 and ETTm2 datasets, yielding reductions of up to $48.6\%$ in CRPS on ETTh2 and $25.1\%$ in NMAE on ETTm2. Integrating PatchTST into the CLaST framework yields substantial NMAE improvements across most benchmarks, with relative gains ranging from $6.41\%$ on Electricity to $61.84\%$ on ETTh2.

The exceptionally high dimensionality of the Traffic (862 channels) and Electricity (321 channels) datasets renders long-term forecasting particularly challenging. Consequently, we extended our evaluation to two additional energy-domain datasets, Solar and ERCOT~\cite{shchur2025fev}. 
Due to the substantial computational resources already expended on the main Table~\ref{tab:long_term} for long-term experiments running architectures like diffusion-based (TSDiff), attention-based (PatchTST), flow-based (TFM), and ODE-based (e.g. DeNOTS) was not feasible within our remaining budget. We therefore restrict this comparison to the two most relevant VAE-based baselines: $K^2VAE$, the strongest empirical competitor on the main benchmarks, and LaST, the foundational architecture upon which CLaST is built. As demonstrated in Table~\ref{tab:solar_ercot}, CLaST consistently attains the best long-term forecasting results, outperforming the second-best model by margins of up to $12.9\%$ in CRPS and $11\%$ in NMAE. As shown in Figure~\ref{fig:clast-components} our CLasT effectively models both trend and seasonal components.

\begin{table}[!htbp]
\centering
\scriptsize
\caption{\textbf{Comparison of long-term probabilistic time series forecasting performance on Solar and ERCOT datasets} ($N = 96$, $L=720$). Lower CRPS and NMAE scores indicate superior forecasting accuracy. Reported results show the \textbf{mean values and standard errors}, computed from \textbf{three independent runs} involving model retraining and evaluation. \textbf{Boldface} highlights the best-performing method, while \underline{underlining} denotes the second-best result.}
\label{tab:solar_ercot}
\begin{tabular}{llccccccc}
\toprule
Dataset & Metric & CLaST (ours) & K$^2$VAE & LaST \\
\midrule
ERCOT 
& CRPS & $\mathbf{0.080 \pm 0.001}$ & \underline{$0.084 \pm 0.001$} & $0.095 \pm 0.004$ \\
& NMAE & $\mathbf{0.090 \pm 0.002}$ & \underline{$0.103 \pm 0.002$} & $0.106 \pm 0.005$ \\
\midrule
Solar 
& CRPS & $\mathbf{0.529 \pm 0.010}$ & $0.696 \pm 0.020$ & \underline{$0.607 \pm 0.015$} \\
& NMAE & $\mathbf{0.573 \pm 0.016}$ & $0.637 \pm 0.006$ & \underline{$0.644 \pm 0.011$} \\
\bottomrule
\end{tabular}
\end{table}

\paragraph{Ablation studies} To verify that the observed performance gains stem from our regularization strategy rather than the choice of encoder, we conduct additional ablation experiments on three datasets (ETTh1, ETTh2, Weather) using different backbone architectures. With PatchTST as the backbone, CLaST improves both CRPS and NMAE by up to 46.8\% and 45.6\%, respectively~\footnote{\url{https://anonymous.4open.science/r/CLaST-3407/tables/last_clast_patchtst.md}}. When using the original FeedNet backbone from LaST, the gains reach 46.17\% (CRPS) and 52.37\% (NMAE)~\footnote{\url{https://anonymous.4open.science/r/CLaST-3407/tables/last_clast_feednet.md}}.

We conducted a detailed ablation study to understand the model analysis. In particular to examine the effect of hyperparameter choices, we present a sensitivity analysis in Appendix~\ref{app:sensitivity_analysis}. 
Appendix~\ref{app:sense_hidden_dim} studies the effect of the latent space dimensionality while Appendix~\ref{app:sense_hist_length} analyzes the impact of the context length. Additionally, we study how well temporal structure is preserved in the latent space. Our results in Appendix~\ref{app:temporal_structure} indicate that the learned embeddings retain temporal dependencies. The implementation details are provided in the Appendix~\ref{app:model_analysis}.

\section{Discussion}
\textbf{How the proposed loss enhances forecasting.} 
Accurate probabilistic forecasting requires capturing diverse temporal dynamics and quantifying uncertainty. When latent representations collapse, decoders produce over-smoothed predictions and degraded CRPS. Our loss prevents this by structuring the latent space to preserve lag-decay correlations, keeping temporally distinct inputs separable and enabling sharper, better-calibrated forecasts (Fig.~\ref{fig:similarity_heatmaps}, Fig.~\ref{fig:clast-components}). The resulting gains in CRPS and NMAE directly reflect this principled temporal regularisation.

\textbf{Theoretical limitations.} 
CLaST assumes inter-observation covariance depends solely on temporal lag, not absolute time. This aligns with many real-world domains where dependency structures remain stable despite non-stationary means: climatological series with long-term trends, financial indicators under trend-stationary models~\cite{hamilton2020time}, and industrial sensor data with invariant physical dynamics~\cite{williams2006gaussian}.

\textbf{Practical limitations.} 
The contextual similarity alignment loss incurs $\mathcal{O}(N^2)$ complexity due to full similarity-matrix construction. In practice, this overhead is modest: LaST trains $\sim$28\% faster per epoch than CLaST~\footnote{\url{https://anonymous.4open.science/r/CLaST-3407/tables/time.md}}, yet CLaST delivers substantially better performance. As we mentioned earlier in our main results with PatchTST backbone, CLaST improves CRPS/NMAE by up to 46.8\%/45.6\%
Using the FeedNet, gains reach 46.17\%/52.37\
This marginal computational cost is strongly justified by the consistent, backbone-agnostic improvements in predictive quality.

\section{Conclusion}
This paper presented \textbf{CLaST}, a Context-aware VAE framework for probabilistic time series forecasting. Standard generative objectives often fail to encourage informative latent representations, particularly in sequential settings. CLaST mitigates this issue by introducing a natural theoretically-grounded loss function that accounts for contextual similarity between observations. It transfers temporal structure from the input space to the latent space using theoretically grounded loss functions, ensuring that the embeddings preserve contextual structure of dependent data.

Across a range of benchmarks covering diverse domains such as Electricity, ETT, and Weather, the proposed method demonstrates consistent and meaningful gains over strong baselines. These improvements reflect CLaST's ability to capture intrinsic temporal relationships while maintaining the flexibility of latent-variable models. In short-term forecasting tasks, our approach achieves improvements of up to $16.4\%$ in CRPS and $14.4\%$ in NMAE. Furthermore, for the long-term forecasting CLaST attains superior overall performance, exceeding the second-best method by up to $48.6\%$ and $25.1\%$ in CRPS and NMAE, respectively, indicating enhanced uncertainty quantification and reduced forecasting error without undermining prior modeling advances.

\bibliographystyle{IEEEtran}
\bibliography{references}

@book{hamilton2020time,
  title={Time series analysis},
  author={Hamilton, James D},
  year={2020},
  publisher={Princeton university press}
}

@inproceedings{kuleshov2026denots,
    title={{DeNOTS}: Stable Deep Neural {ODE}s for Time Series},
    author={Ilya Kuleshov and Evgenia Romanenkova and Vladislav Andreevich Zhuzhel and Galina Boeva and Evgeni Vorsin and Alexey Zaytsev},
    booktitle={The Fourteenth International Conference on Learning Representations (ICLR)},
    year={2026},
    url={https://openreview.net/forum?id=SFoDJZ1sSk}
}

@book{williams2006gaussian,
  title={Gaussian processes for machine learning},
  author={Williams, Christopher KI and Rasmussen, Carl Edward},
  volume={2},
  number={3},
  year={2006},
  publisher={MIT press Cambridge, MA}
}

@article{wu2025k2vae,
  title={K2VAE: A Koopman-Kalman Enhanced Variational AutoEncoder for Probabilistic Time Series Forecasting},
  author={Wu, Xingjian and Qiu, Xiangfei and Gao, Hongfan and Hu, Jilin and Yang, Bin and Guo, Chenjuan},
  journal={arXiv preprint arXiv:2505.23017},
  year={2025}
}

@article{zhang2024probts,
  title={ProbTS: Benchmarking point and distributional forecasting across diverse prediction horizons},
  author={Zhang, Jiawen and Wen, Xumeng and Zhang, Zhenwei and Zheng, Shun and Li, Jia and Bian, Jiang},
  journal={Advances in Neural Information Processing Systems},
  volume={37},
  pages={48045--48082},
  year={2024}
}

@inproceedings{zhou2021informer,
  title={Informer: Beyond efficient transformer for long sequence time-series forecasting},
  author={Zhou, Haoyi and Zhang, Shanghang and Peng, Jieqi and Zhang, Shuai and Li, Jianxin and Xiong, Hui and Zhang, Wancai},
  booktitle={Proceedings of the AAAI conference on artificial intelligence},
  volume={35},
  number={12},
  pages={11106--11115},
  year={2021}
}

@article{wu2021autoformer,
  title={Autoformer: Decomposition transformers with auto-correlation for long-term series forecasting},
  author={Wu, Haixu and Xu, Jiehui and Wang, Jianmin and Long, Mingsheng},
  journal={Advances in neural information processing systems},
  volume={34},
  pages={22419--22430},
  year={2021}
}

@article{box1970distribution,
  title={Distribution of residual autocorrelations in autoregressive-integrated moving average time series models},
  author={Box, George EP and Pierce, David A},
  journal={Journal of the American statistical Association},
  volume={65},
  number={332},
  pages={1509--1526},
  year={1970},
  publisher={Taylor \& Francis}
}

@article{biller2003modeling,
  title={Modeling and generating multivariate time-series input processes using a vector autoregressive technique},
  author={Biller, Bahar and Nelson, Barry L},
  journal={ACM Transactions on Modeling and Computer Simulation (TOMACS)},
  volume={13},
  number={3},
  pages={211--237},
  year={2003},
  publisher={ACM New York, NY, USA}
}

@article{cao2003support,
  title={Support vector machine with adaptive parameters in financial time series forecasting},
  author={Cao, Li-Juan and Tay, Francis Eng Hock},
  journal={IEEE Transactions on neural networks},
  volume={14},
  number={6},
  pages={1506--1518},
  year={2003},
  publisher={IEEE}
}

@article{winters1960forecasting,
  title={Forecasting sales by exponentially weighted moving averages},
  author={Winters, Peter R},
  journal={Management science},
  volume={6},
  number={3},
  pages={324--342},
  year={1960},
  publisher={INFORMS}
}

@article{jin2024survey,
  title={A survey on graph neural networks for time series: Forecasting, classification, imputation, and anomaly detection},
  author={Jin, Ming and Koh, Huan Yee and Wen, Qingsong and Zambon, Daniele and Alippi, Cesare and Webb, Geoffrey I and King, Irwin and Pan, Shirui},
  journal={IEEE Transactions on Pattern Analysis and Machine Intelligence},
  year={2024},
  publisher={IEEE}
}

@article{salinas2020deepar,
  title={DeepAR: Probabilistic forecasting with autoregressive recurrent networks},
  author={Salinas, David and Flunkert, Valentin and Gasthaus, Jan and Januschowski, Tim},
  journal={International journal of forecasting},
  volume={36},
  number={3},
  pages={1181--1191},
  year={2020},
  publisher={Elsevier}
}

@article{kingma2013auto,
  title={Auto-Encoding Variational Bayes},
  author={Kingma, Diederik P and Welling, Max},
  journal={arXiv e-prints},
  pages={arXiv--1312},
  year={2013}
}

@article{wang2022learning,
  title={Learning latent seasonal-trend representations for time series forecasting},
  author={Wang, Zhiyuan and Xu, Xovee and Zhang, Weifeng and Trajcevski, Goce and Zhong, Ting and Zhou, Fan},
  journal={Advances in Neural Information Processing Systems},
  volume={35},
  pages={38775--38787},
  year={2022}
}

@article{chen2018neural,
  title={Neural ordinary differential equations},
  author={Chen, Ricky TQ and Rubanova, Yulia and Bettencourt, Jesse and Duvenaud, David K},
  journal={Advances in neural information processing systems},
  volume={31},
  year={2018}
}

@article{rubanova2019latent,
  title={Latent ordinary differential equations for irregularly-sampled time series},
  author={Rubanova, Yulia and Chen, Ricky TQ and Duvenaud, David K},
  journal={Advances in neural information processing systems},
  volume={32},
  year={2019}
}

@article{de2019gru,
  title={GRU-ODE-Bayes: Continuous modeling of sporadically-observed time series},
  author={De Brouwer, Edward and Simm, Jaak and Arany, Adam and Moreau, Yves},
  journal={Advances in neural information processing systems},
  volume={32},
  year={2019}
}

@article{ho2020denoising,
  title={Denoising diffusion probabilistic models},
  author={Ho, Jonathan and Jain, Ajay and Abbeel, Pieter},
  journal={Advances in neural information processing systems},
  volume={33},
  pages={6840--6851},
  year={2020}
}

@inproceedings{rasul2021autoregressive,
  title={Autoregressive denoising diffusion models for multivariate probabilistic time series forecasting},
  author={Rasul, Kashif and Seward, Calvin and Schuster, Ingmar and Vollgraf, Roland},
  booktitle={International conference on machine learning},
  pages={8857--8868},
  year={2021},
  organization={PMLR}
}

@article{tashiro2021csdi,
  title={Csdi: Conditional score-based diffusion models for probabilistic time series imputation},
  author={Tashiro, Yusuke and Song, Jiaming and Song, Yang and Ermon, Stefano},
  journal={Advances in neural information processing systems},
  volume={34},
  pages={24804--24816},
  year={2021}
}

@article{hochreiter1997long,
  title={Long short-term memory},
  author={Hochreiter, Sepp and Schmidhuber, J{\"u}rgen},
  journal={Neural computation},
  volume={9},
  number={8},
  pages={1735--1780},
  year={1997},
  publisher={MIT Press}
}

@article{kollovieh2023predict,
  title={Predict, refine, synthesize: Self-guiding diffusion models for probabilistic time series forecasting},
  author={Kollovieh, Marcel and Ansari, Abdul Fatir and Bohlke-Schneider, Michael and Zschiegner, Jasper and Wang, Hao and Wang, Yuyang Bernie},
  journal={Advances in Neural Information Processing Systems},
  volume={36},
  pages={28341--28364},
  year={2023}
}

@article{zhang2024trajectory,
  title={Trajectory flow matching with applications to clinical time series modelling},
  author={Zhang, Xi Nicole and Pu, Yuan and Kawamura, Yuki and Loza, Andrew and Bengio, Yoshua and Shung, Dennis and Tong, Alexander},
  journal={Advances in Neural Information Processing Systems},
  volume={37},
  pages={107198--107224},
  year={2024}
}

@article{nie2022time,
  title={A Time Series is Worth 64Words: Long-term Forecasting with Transformers},
  author={Nie, Y},
  journal={arXiv preprint arXiv:2211.14730},
  year={2022}
}

@article{zamo2018estimation,
  title={Estimation of the continuous ranked probability score with limited information and applications to ensemble weather forecasts},
  author={Zamo, Micha{\"e}l and Naveau, Philippe},
  journal={Mathematical Geosciences},
  volume={50},
  number={2},
  pages={209--234},
  year={2018},
  publisher={Springer}
}

@inproceedings{Balestriero22,
 author    = {Balestriero, R. and LeCun, Y.},
 title     = {Contrastive and non-contrastive self-supervised learning recover global and local spectral embedding methods},
 year      = {2022},
 pages     = {26671--26685},
 booktitle     = {Advances in Neural Information Processing Systems (NeurIPS 2022)}
}

@inproceedings{nie2023patchtst,
  title     = {A Time Series is Worth 64 Words: Long-term Forecasting with Transformers},
  author    = {Nie, Yuqi and
               H. Nguyen, Nam and
               Sinthong, Phanwadee and 
               Kalagnanam, Jayant},
  booktitle = {International Conference on Learning Representations},
  year      = {2023}
}

@article{Zeng2022AreTE,
  title={Are Transformers Effective for Time Series Forecasting?},
  author={Ailing Zeng and Muxi Chen and Lei Zhang and Qiang Xu},
  journal={arXiv preprint arXiv:2205.13504},
  year={2022}
}

@article{dong2016learning,
  title={Learning Laplacian matrix in smooth graph signal representations},
  author={Dong, Xiaowen and Thanou, Dorina and Frossard, Pascal and Vandergheynst, Pierre},
  journal={IEEE Transactions on Signal Processing},
  volume={64},
  number={23},
  pages={6160--6173},
  year={2016},
  publisher={IEEE}
}

@inproceedings{kalofolias2016learn,
  title={How to learn a graph from smooth signals},
  author={Kalofolias, Vassilis},
  booktitle={Artificial intelligence and statistics},
  pages={920--929},
  year={2016},
  organization={PMLR}
}

@article{chan2019neural,
  title={Neural entropic estimation: A faster path to mutual information estimation},
  author={Chan, Chung and Al-Bashabsheh, Ali and Huang, Hing Pang and Lim, Michael and Tam, Da Sun Handason and Zhao, Chao},
  journal={arXiv preprint arXiv:1905.12957},
  year={2019}
}

@article{shin2024experimental,
author = {Shin, Donghoon and Kim, Hyung},
year = {2024},
month = {01},
pages = {},
title = {An experimental study of neural estimators of the mutual information between random vectors modeling power spectrum features},
volume = {2024},
journal = {EURASIP Journal on Advances in Signal Processing},
doi = {10.1186/s13634-023-01092-1}
}

@article{guo2022tight,
  title={Tight mutual information estimation with contrastive fenchel-legendre optimization},
  author={Guo, Qing and Chen, Junya and Wang, Dong and Yang, Yuewei and Deng, Xinwei and Huang, Jing and Carin, Larry and Li, Fan and Tao, Chenyang},
  journal={Advances in Neural Information Processing Systems},
  volume={35},
  pages={28319--28334},
  year={2022}
}

@inproceedings{belghazi2018mutual,
  title={Mutual information neural estimation},
  author={Belghazi, Mohamed Ishmael and Baratin, Aristide and Rajeshwar, Sai and Ozair, Sherjil and Bengio, Yoshua and Courville, Aaron and Hjelm, Devon},
  booktitle={International conference on machine learning},
  pages={531--540},
  year={2018},
  organization={PMLR}
}

@InProceedings{pmlr-v119-cheng20b,
  title = 	 {{CLUB}: A Contrastive Log-ratio Upper Bound of Mutual Information},
  author =       {Cheng, Pengyu and Hao, Weituo and Dai, Shuyang and Liu, Jiachang and Gan, Zhe and Carin, Lawrence},
  booktitle = 	 {Proceedings of the 37th International Conference on Machine Learning},
  pages = 	 {1779--1788},
  year = 	 {2020},
  editor = 	 {III, Hal Daumé and Singh, Aarti},
  volume = 	 {119},
  series = 	 {Proceedings of Machine Learning Research},
  month = 	 {13--18 Jul},
  publisher =    {PMLR},
  url = 	 {https://proceedings.mlr.press/v119/cheng20b.html},
}

@inproceedings{haoyietal-informer-2021,
  author    = {Haoyi Zhou and
               Shanghang Zhang and
               Jieqi Peng and
               Shuai Zhang and
               Jianxin Li and
               Hui Xiong and
               Wancai Zhang},
  title     = {Informer: Beyond Efficient Transformer for Long Sequence Time-Series Forecasting},
  booktitle = {The Thirty-Fifth {AAAI} Conference on Artificial Intelligence, {AAAI} 2021, Virtual Conference},
  volume    = {35},
  number    = {12},
  pages     = {11106--11115},
  publisher = {{AAAI} Press},
  year      = {2021},
}

@inproceedings{lai2018modeling,
  title={Modeling long-and short-term temporal patterns with deep neural networks},
  author={Lai, Guokun and Chang, Wei-Cheng and Yang, Yiming and Liu, Hanxiao},
  booktitle={The 41st international ACM SIGIR conference on research \& development in information retrieval},
  pages={95--104},
  year={2018}
}

@inproceedings{akiba2019optuna,
  title={Optuna: A next-generation hyperparameter optimization framework},
  author={Akiba, Takuya and Sano, Shotaro and Yanase, Toshihiko and Ohta, Takeru and Koyama, Masanori},
  booktitle={Proceedings of the 25th ACM SIGKDD international conference on knowledge discovery \& data mining},
  pages={2623--2631},
  year={2019}
}

@techreport{shchur2025fev,
  title={fev-bench: A Realistic Benchmark for Time Series Forecasting},
  author={Shchur, Oleksandr and Ansari, Abdul Fatir and Turkmen, Caner and Stella, Lorenzo and Erickson, Nick and Guerron-Quintana, Pablo and Bohlke-Schneider, Michael and Wang, Yuyang},
  year={2025},
  institution={Boston College Department of Economics}
}

\appendix

Below, we provide the complete mathematical derivations, lemma proofs, and reductions used to obtain the final closed-form solution.

\subsection{Optimal mapping from feature to similarity space}\label{app:sigmoid}
\begin{proof}
By Assumptions~\hyperlink{ass:A1}{A1}--\hyperlink{ass:A2}{A2}, $z_{\delta}:=g(r_{\delta})$ is Gaussian with class-conditional means $\mu_y=g(\rho_y)$ ($\mu_0\neq\mu_1$) and common variance $\sigma^2$. The log-likelihood ratio is linear:
\[
\Lambda(z_{\delta}) = \log\frac{p(z_{\delta}\mid Y_{\delta}=1)}{p(z_{\delta}\mid Y_{\delta}=0)}
= \alpha z_{\delta} + \beta', \quad \alpha:=\frac{\mu_1-\mu_0}{\sigma^2}.
\]
By Bayes' rule, the posterior is $P(Y_{\delta}=1\mid z_{\delta}) = \sigma(\alpha z_{\delta}+\beta)$ with $\beta:=\beta'+\log(\pi_1/\pi_0)$. For small $r_{\delta}$, a first-order Taylor expansion $g(r_{\delta})=g(0)+g'(0)r_{\delta}+o(r_{\delta})$ yields
\[
\begin{aligned}
P(Y_{\delta}=1\mid r_{\delta}) = \sigma(\tilde{\alpha} r_{\delta} + \tilde{\beta}) + o(r_{\delta}), \\
\tilde{\alpha} := \alpha g'(0),\; 
\quad \tilde{\beta}:=\beta+\alpha g(0),
\end{aligned}
\]
completing the proof.
\end{proof}

\subsection{Necessary and sufficient conditions}
\label{app:toeplitz_without_constraints}
\begin{proof}
Since $\mathcal{G}$ consists of symmetric, non-negative Toeplitz matrices with zero diagonal, any $\widehat{\matG}\in\mathcal{G}$ is parameterized by $\widehat{g}=(\widehat{g}_1,\dots,\widehat{g}_{N-1})$ with $\widehat{g}_k\ge0$, where $\widehat{g}_{ij}=\widehat{g}_{|i-j|}$. Defining the lag-averaged distance $\phi_k \coloneqq \frac{1}{N-k}\sum_{|i-j|=k} d_{ij}$, problem~\eqref{eq:general_problem} reduces to
\begin{equation}
\label{eq:reduced}
\min_{\widehat{g}_k>0}\;
\mathcal{J}(\widehat{g}) = 2\sum_{k=1}^{N-1}(N-k)\bigl[\phi_k\widehat{g}_k + \tau\widehat{g}_k(\ln\widehat{g}_k-1)\bigr].
\end{equation}
$\mathcal{J}$ is separable and strictly convex on $\{\widehat{g}_k>0\}$ because each term has second derivative $2\tau(N-k)/\widehat{g}_k>0$. Setting $\partial\mathcal{J}/\partial\widehat{g}_k=0$ yields the unique stationary point
\[
\widehat{g}_k^\star = \exp(-\phi_k/\tau), \qquad k=1,\dots,N-1,
\]
which, by strict convexity, is the unique global minimizer of~\eqref{eq:reduced}. The corresponding Toeplitz matrix $\widehat{\matG}$ is therefore the unique solution to~\eqref{eq:general_problem}.

\end{proof}

\subsection{Datasets Description}
\label{app:datasets}
We evaluate on nine standard multivariate time-series benchmarks: ETTh1/2 and ETTm1/2 (hourly/15-min transformer measurements, 2016--2018), Electricity (hourly consumption for 321 users, 2012--2014), Weather (10-min meteorological records, 2020), Traffic (hourly road occupancy across 862 sensors, 2015--2016), Solar (hourly energy consumption and weather data with solar irradiance and meteorological variables), and ERCOT (hourly electricity load across 8 U.S. regions, 2004–2021)~\cite{wu2025k2vae, zhang2024probts, zhou2021informer, wu2021autoformer, shchur2025fev}. These datasets span diverse temporal resolutions, dimensionalities, and seasonal patterns. To encode temporal context, we append 11 auxiliary features: sinusoidal embeddings $(\cos(2\pi t), \sin(2\pi t))$ for year, quarter, month, week, and day cycles, plus a normalized inter-sample interval $(t_i - t_{i-1}) / \Delta t$. All series are chronologically partitioned into 70\% training, 10\% validation, and 20\% testing splits.

\subsection{Evaluation Metrics}
\label{app:metrics}
We report forecast performance using two standard metrics: Normalized Mean Absolute Error (NMAE) for point accuracy and Continuous Ranked Probability Score (CRPS) for probabilistic calibration. NMAE is computed as
\[
\mathrm{NMAE}(x, \hat{x}) = \frac{1}{N} \sum_{n=1}^N \frac{\sum_{i=1}^{F}\lvert x^n_i - \hat{x}^n_i \rvert}{\sum_{i=1}^{F}\lvert x^n_i \rvert},
\]
where $\hat{x}$ denotes the median of 100 sampled trajectories. CRPS measures the discrepancy between the predictive CDF $F(z)$ and the observation $x$:
\[
\mathrm{CRPS} = \int_{\mathbb{R}} \bigl(F(z) - \mathbb{I}\{x \leq z\}\bigr)^2 dz,
\]
approximated via Monte Carlo sampling~\cite{zamo2018estimation}. For both metrics, lower values indicate better performance.

\subsection{Implementation Details}\label{app:implementation_details}
We tune CLaST hyperparameters on the validation set via Optuna~\cite{akiba2019optuna}, searching $\alpha \in [1,10]$, $s_{\min} \in (0,0.5)$, and $\tau \in (0.01,10)$. We evaluate three encoder backbones:

\textbf{DLinear}: We modify the original architecture~\cite{Zeng2022AreTE} to project trend/seasonal components into latent space (dim 64, or 128 for Electricity/Traffic) using a running-mean window of 25.

\textbf{MLP}: A two-layer ReLU MLP with hidden dimension tuned over $\{64,128,256\}$ and fixed latent output dimension 64.

\textbf{PatchTST}: Separate PatchTST encoders~\cite{nie2023patchtst} model trend and seasonal components, each followed by linear heads for latent mean/variance estimation. Patch length and stride are fixed to 16 and 8; latent dimension is 64 (128 for Electricity/Traffic). Remaining hyperparameters are optimized via Optuna\footnote{\url{https://anonymous.4open.science/r/CLaST-3407/tables/patchtst_backbone.md}} or follow the original implementation\footnote{\url{https://github.com/yuqinie98/PatchTST}}.

\section{CLaST Analysis}
\label{app:model_analysis}

\begin{figure*}[!h]
    \centering
    \includegraphics[width=\linewidth]{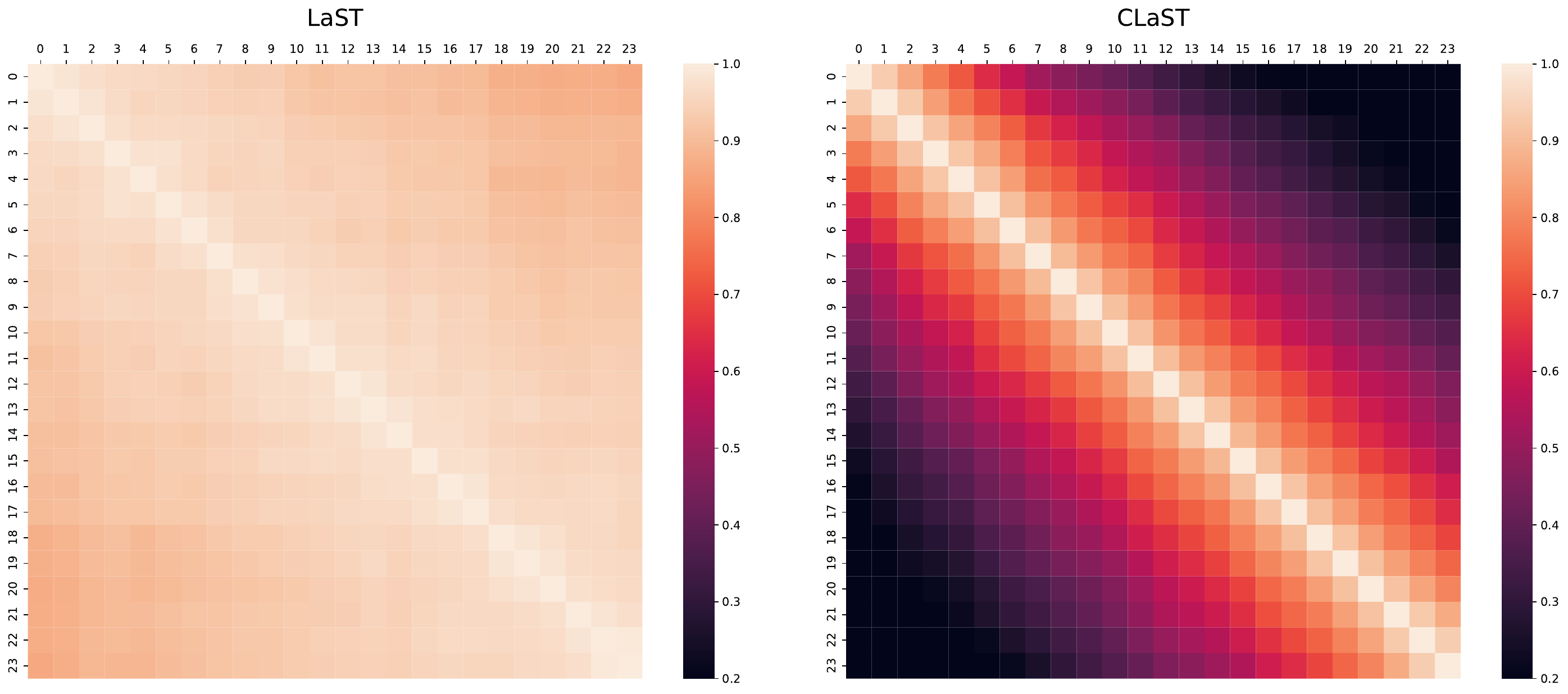}
    \caption{Average cosine similarity between embeddings of input data across time steps for LaST and CLaST on the Weather dataset with context length and horizon $N=L=24$.}
    \label{fig:similarity_heatmaps}
\end{figure*}

\subsection{Mapping functions from correlation to similarity space}
\label{app:ablation_study}

We evaluate the impact of different mappings from the correlation space to the similarity space. Specifically, we compare the theoretically motivated \textit{sigmoid} mapping with alternative choices, including ReLU, absolute value, and a parabolic transformation. As shown in the Table~\ref{tab:mapping}, the sigmoid mapping consistently yields the best performance across both datasets (Electricity and Traffic) according to both CRPS and NMAE.

\begin{table}[!h]
\centering
\caption{\textbf{Impact of mapping functions from correlation to similarity space} on probabilistic forecasting performance for Electricity and Traffic datasets. Lower CRPS and NMAE indicate better performance. Results are reported as mean $\pm$ standard error}.
\label{tab:mapping}
\resizebox{\linewidth}{!}{
\begin{tabular}{llcccc}
\toprule
Dataset & Metric & Sigmoid & ReLU & Absolute & Parabola \\
\midrule
Electricity 
& CRPS 
& $\mathbf{0.107 \pm 0.000}$ 
& $0.111 \pm 0.000$ 
& $0.111 \pm 0.001$ 
& $0.111 \pm 0.001$ \\

& NMAE 
& $\mathbf{0.132 \pm 0.000}$ 
& $0.137 \pm 0.000$ 
& $0.140 \pm 0.001$ 
& $0.139 \pm 0.001$ \\

\midrule
Traffic 
& CRPS 
& $\mathbf{0.270 \pm 0.004}$ 
& $0.274 \pm 0.001$ 
& $0.278 \pm 0.003$ 
& $0.285 \pm 0.012$ \\

& NMAE 
& $\mathbf{0.323 \pm 0.009}$ 
& $0.332 \pm 0.001$ 
& $0.338 \pm 0.004$ 
& $0.345 \pm 0.013$ \\

\bottomrule
\end{tabular}
}
\end{table}

\subsection{Sensitivity Analysis}\label{app:sensitivity_analysis}

\subsubsection{Sensitivity to Latent Space Dimensionality}
\label{app:sense_hidden_dim}

The dimensionality of the latent space is selected based on validation set optimization. However, it is important to assess the model’s sensitivity to this parameter, as a stable model should not exhibit significant performance degradation when using comparable dimensionalities. Our empirical results~\footnote{\url{https://anonymous.4open.science/r/CLaST-3407/tables/dim_abl.png}} demonstrate that CLaST achieves consistent forecasting performance across latent dimensions of 32, 64, 128, and 256.


\subsubsection{Backbone Sensitivity to Context Length}\label{app:sense_hist_length}

All encoders are tested with different context lengths of 24, 48, 96, 192, and 384 time steps. The impact of the encoder choice and the context length is provided in our supplementary materials~\footnote{\url{https://anonymous.4open.science/r/CLaST-3407/tables/backbones.png}}. The Transformer-based CLaST model generally performs best, with the MLP backbone typically ranking second. However, for some long-horizon forecasts--particularly on the Traffic and ETTh1 datasets--the MLP model can outperform the Transformer. On the other hand, for the Weather dataset, the Transformer model achieves better results than the MLP in certain long-horizon settings, while underperforming in others.

\subsection{Temporal Structure in Learned Embeddings}\label{app:temporal_structure}
To better understand the effect of the proposed training objective on the learned representations, we analyze the temporal structure of the resulting embedding space.

Figure \ref{fig:similarity_heatmaps} presents the average pairwise cosine similarity of input embeddings across different temporal positions for LaST and CLaST. Both approaches exhibit a diagonal structure in the resulting similarity matrices, indicating that embeddings corresponding to temporally adjacent timestamps are more similar, while similarity decreases with increasing temporal distance. However, LaST shows uniformly high average cosine similarity across the matrix, suggesting overly smooth representations with limited discriminability between distant timestamps. In contrast, CLaST yields lower and more heterogeneous similarity values, leading to a more pronounced diagonal pattern and improved preservation of temporal structure, due to the contrastive regularization and temporal bias imposed by the training loss.



\begin{figure}[!b]
  \centering

  \begin{subfigure}[t]{\linewidth}
    \centering
    \includegraphics[width=\linewidth]{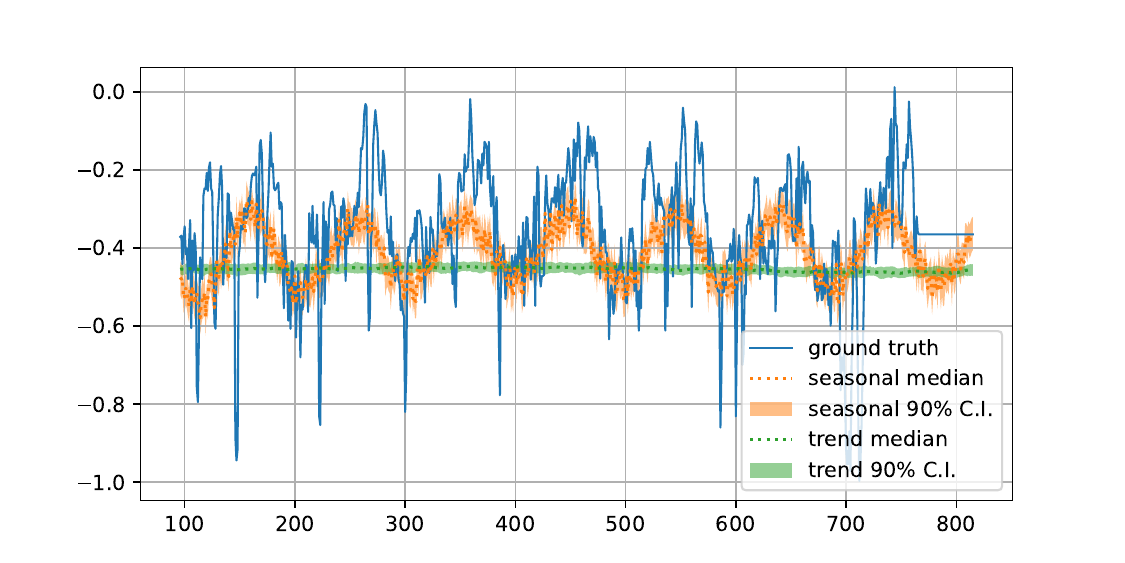}
    \caption{ETTm2.}
    \label{fig:clast-components-ettm2}
  \end{subfigure}


  \begin{subfigure}[t]{\linewidth}
    \centering
    \includegraphics[width=\linewidth]{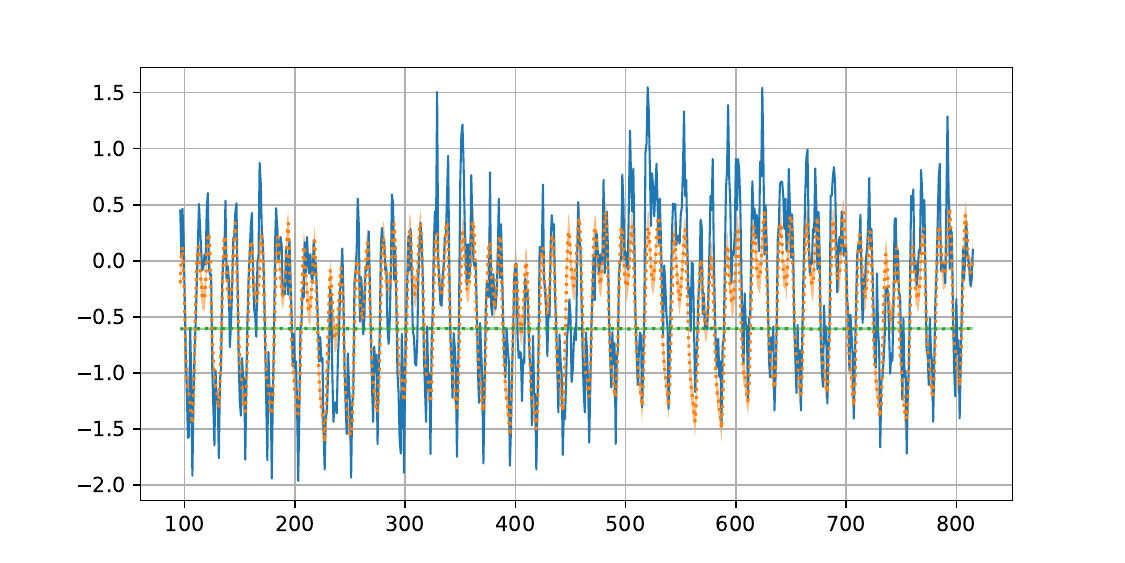}
    \caption{Electricity.}
    \label{fig:clast-components-electricity}
  \end{subfigure}

  \caption{CLaST decomposed forecasts ($N=96$, $L=720$).
    Each figure shows ground truth, seasonal, and trend with 90\% confidence intervals from 100 samples.}
  \label{fig:clast-components}
\end{figure}

\end{document}